\def\NAT@def@citea{\def\@citea{\NAT@separator}}
\theoremstyle{plain}
\newtheorem{theorem}{Theorem}[section]
\newtheorem{lemma}[theorem]{Lemma}
\newtheorem{algorithm}[theorem]{Algorithm}
\theoremstyle{definition}
\theoremstyle{remark}
\begin{document}

\title{A Double Inertial Forward-Backward Splitting Algorithm With Applications to Regression and Classification Problems}
\author{
\name{\.{I}rfan I\c{S}IK\textsuperscript{a} \thanks{CONTACT \.{I}rfan I\c{S}IK. Email:
irfan.isik11@erzurum.edu.tr}, Ibrahim KARAHAN\textsuperscript{b} and Okan
ERKAYMAZ\textsuperscript{c}}
\affil{\textsuperscript{a,b}Department of Mathematics, Faculty of Science,
Erzurum Technical University, Erzurum, 25100, T\"{u}rkiye;
\textsuperscript{c}Department of Computer Engineering, Naval Academy National Defense University, Istanbul, T\"{u}rkiye.}}
\maketitle

\begin{abstract}
This paper presents an improved forward-backward splitting algorithm with two inertial parameters. It aims to find a point in the real Hilbert space at which the sum of a co-coercive operator and a maximal monotone operator vanishes. Under standard assumptions, our proposed algorithm demonstrates weak convergence. We present numerous experimental results to demonstrate the behavior of the developed algorithm by comparing it with existing algorithms in the literature for regression and data classification problems. Furthermore, these implementations suggest our proposed algorithm yields superior outcomes when benchmarked against other relevant algorithms in existing literature.
\end{abstract}


\begin{keywords}
Double inertial, monotone inclusion problem, regression, classification, weak convergence
\end{keywords}

\section{Introduction}

\bigskip Let assume that $H$ is a real Hilbert Space, $A:H\rightarrow H$ is
a co-coercive operator and $B:H\rightarrow 2^{H}$ is a maximal monotone
operator. In this study, we consider the monotone inclusion problem, which is to find $z\in H$
such that%
\begin{equation}
0\in \left( Az+Bz\right) \text{.}  \label{1.1}
\end{equation}%
Monotone inclusion problems serve as a fundamental tool in addressing a wide
range of problems on machine-learning, linear inverse problems, image
processing, variational inequality problems, convex minimization problems, convex-concave saddle point problems and equilibrium problems, see
e.g. \cite{4, 5, 6, 7, 8, 31}. More specifically, it finds utility across
various scientific domains, including but not limited to statistical
regression, machine learning, image and signal processing see, e.g. \cite%
{10, 11, 12, 13, 29, 42, 43}. Recognizing its significance, numerous authors
have explored suitable algorithms to find solutions for it. The most popular
algorithm to solve (\ref{1.1}) is the forward-backward splitting algorithm
given by Lions and Mercier \cite{35}:%
\begin{equation}
x_{n+1}=\left( I+\lambda _{n}A\right) ^{-1}\left( I-\lambda _{n}B\right)
x_{n}\text{ \ for all }n\in
\mathbb{N}
\label{1.2}
\end{equation}%
where $A$ and $B$ are monotone operators and $\lambda _{n}$ is step size. In
2000, Tseng \cite{31} introduced a two-step strategy as part of the
forward-backward algorithm, necessitating adjustments to equation (\ref{1.1}%
). Tseng's approach is delineated by the following way with $\rho ,k\in (0,1)$, $%
\alpha >0$\ and $\theta _{1}\in H$,
\begin{eqnarray*}
y_{n} &=&(I+\gamma _{n}A)^{-1}(I-\gamma _{n}B)x_{n} \\
x_{n+1} &=&x_{n}-\gamma _{n}(Bx_{n}-By_{n})
\end{eqnarray*}%
where $\gamma _{n}=\alpha k^{m_{n}}$\ and
$m_{n}$\ signify the smallest non-negative integer that satisfies the following condition:%
\begin{equation}
\gamma _{n}\left\Vert \nabla f(\theta _{n+1})-\nabla f(\theta
_{n})\right\Vert \leq \rho \left\Vert \theta _{n+1}-\theta _{n}\right\Vert
\text{.}
\end{equation}
It has been demonstrated that $\left\{ x _{n}\right\} $\ converges weakly to a solution of $\left( A+B\right) ^{-1}(0)$. This algorithm, called forward--backward--forward, is one of the most basic algorithms. In
2009, Beck and Teboulle defined the Fast Iterative Shrinkage-Thresholding
Algorithm (FISTA) as follows: Let $\alpha _{1}=1$ and $\rho _{0}=\rho
_{1}\in H$,%
\begin{equation}
y_{n}=x_{n}+\beta _{n}(x_{n}-x_{n-1})\text{,}\\
x_{n+1}=\left( I+\frac{1}{L}\partial f\right) ^{-1}(I-\frac{1}{L}\nabla
f)(y_{n})\text{,}
\end{equation}
where $\beta _{n}=\frac{t_{n}-1}{t_{n+1}}$, $t_{n+1}=\frac{1+\sqrt{%
1+4t_{n}^{2}}}{2}$, and $L$ is Lipschitz constant of $\nabla f$. They demonstrated
the convergence rate and used it for image reconstruction.

Following these studies, new algorithms were defined in order to increase the running speed
of existing algorithms. Adding an inertial extropolation step to algorithms, which was was first suggested by Polyak \cite {21} in 1964, is one way to increase the speed of algorithms. It was later used by Nesterov \cite{22} in 1983 as an inertial step to solve minimization problems. In the past, many researchers have
included an inertial extrapolation step in their algorithms, see, e.g., \cite{23,24,25,17,18,19,40}. Nevertheless, subsequent to these favorable advancements, Poon and Liang~\cite{38,37} highlighted certain drawbacks associated with optimization methods that incorporate only a single-step inertial extrapolation. Although the inertial term has often been employed to accelerate the convergence of algorithms, it is demonstrated in [\cite{37}, Section 4] that the Douglas–Rachford splitting scheme without any inertial component outperformed its counterpart that incorporates a single-step inertial extrapolation. However, Polyak \cite{39}, who argues that increasing the number of inertial steps in the algorithms will increase the current convergence speed of the algorithms,
has not established both the speedup and convergence states of multi-step
inertial algorithms.

In 2018, Dong et al. \cite{32} presented the double inertial Mann algorithm
in their study. In this study, they also provided the convergence of their
suggested algorithm under some mild conditions. Let $\left\{ \alpha _{n}\right\} \subset \left[ 0,\alpha \right] $ and $\left\{
\beta _{n}\right\} \subset \left[ 0,\beta \right] $ are nondecreasing with $%
\alpha _{1}=\beta _{1}=0$ and $\alpha ,\beta \in \left[ 0,1\right) $. In
addition $T$ be a nonexpansive mapping. The algorithm given by Dong et al. is defined as follows:
\begin{eqnarray*}
z_{n} &=&v_{n}+\alpha _{n}\left( v_{n}-v_{n-1}\right) \\
y_{n} &=&v_{n}+\beta _{n}\left( v_{n}-v_{n-1}\right) \\
v_{n+1} &=&\left( 1-\gamma _{n}\right) z_{n}+\gamma _{n}T\left( y_{n}\right).
\end{eqnarray*}%

Not long after, in 2022 Iyiola and Shehu \cite{33} introduced the new
two-step inertial Douglas-Rachford splitting method in their work. This
approach essentially integrates the two-step inertial proximal point
algorithm with the Douglas-Rachford splitting algorithm, (see \cite{34,35}).
Let given $0\leq \alpha <\frac{1}{3}$ and $\frac{3\alpha -1}{3+4\alpha }%
<\beta \leq 0$. Let $\gamma _{0}=\theta _{0}=\theta _{-1}\in H$, $\rho >0$
and $n>0$. The two step inertial Douglas-Rachford splitting method is introduced by the following way:
\begin{eqnarray*}
\theta _{n+1} &=&J_{n}^{A}\left( 2J_{n}^{B}-I\right) \left( \gamma
_{n}\right) +\left( I-J_{n}^{B}\right) \left( \gamma _{n}\right) \\
\gamma _{n+1} &=&\theta _{n+1}+\alpha (\theta _{n+1}-\theta _{n})+\beta
(\theta _{n}-\theta _{n-1})
\end{eqnarray*}%
where $A,B:H\rightarrow 2^{H}$ be maximal monotone operator on $H$.

In 2023 Suantai et al. \cite{36}, inspired by previous work on solving (\ref%
{1.1}), proposed the following Double inertial forward--backward--forward
algorithm for convex minimization problems. Let us denote $\Gamma $ by $\left( A+B\right) ^{-1}\left( 0\right) \neq
\emptyset $ and let $v_{1}>0$, $\rho \in (0,1)$,
$\left\{ \alpha _{n}\right\} $ and $\left\{ \beta _{n}\right\} $ real
positive sequences. Again let $\theta _{0}$, $\theta _{1}\in H$ and $\left\{
\lambda _{n}\right\} $ be nonnegative sequence such that $\sum_{n=1}^{\infty
}\lambda _{n}<\infty $. Given $\theta _{n}$, $\theta _{n-1}$ and compute%
\begin{eqnarray*}
p_{n} &=&\theta _{n}+\alpha _{n}(\theta _{n}-\theta _{n-1})\text{,} \\
r_{n} &=&p_{n}+\beta _{n}(p_{n}-p_{n-1})\text{,} \\
s_{n} &=&(I+v_{n}B)^{-1}(I-v_{n}A)r_{n}\text{.}
\end{eqnarray*}%
If $s_{n}=r_{n}$ then stop and $s_{n}\in \Gamma $. Else%
\begin{equation}
\theta _{n+1}=s_{n}-v_{n}(As_{n}-Ar_{n})\text{.}
\end{equation}
Here%
\begin{equation}
v_{n+1}=\left\{
\begin{array}{c}
\min \left\{ \frac{\rho \left\Vert r_{n}-s_{n}\right\Vert }{\left\Vert
Ar_{n}-As_{n}\right\Vert },v_{n}+\lambda _{n}\right\} \text{ if }\left\Vert
Ar_{n}-As_{n}\right\Vert \neq 0\text{;} \\
v_{n}+\lambda _{n}\text{ otherwise}%
\end{array}%
\right.
\end{equation}
and the algorithm is updated again from the beginning. For Problem (\ref{1.1}), it is shown in [\cite{37}, Chapter 4] that the Douglas–Rachford splitting method equipped with a two-step inertial extrapolation exhibits faster convergence than its variant utilizing a single-step inertial scheme.

Inspired and motivated by the above studies, our paper offers the following significant contributions: We propose a novel splitting method, namely a forward-backward scheme enhanced with a two-step inertial extrapolation technique, and provide a comprehensive investigation of its weak convergence properties. We implement comprehensive
computational experiments to illustrate the efficient of the proposed
algorithm and also exhibit its superior performance in comparison to the certain
algorithms existed in the literature.

\section{Preliminaries}

In this section, we give the basic concepts and definitions which we need to prove the main results. We use the notations \( p_k \rightharpoonup p \) and \( p_k \to p \) in \( H \) to indicate weak and strong convergence of the sequence \( \{p_k\} \subset H \) to \( p \in H \), respectively, as \( k \to \infty \).
For all \( p, y \in H \) and any \( \alpha \in \mathbb{R} \), the following identities and inequality hold:
\[
\begin{array}{ll}
\text{i)} & \|p + y\|^2 \leq \|p\|^2 + 2\langle y, p + y \rangle \\
\text{ii)} & \|p + y\|^2 = \|p\|^2 + 2\langle p, y \rangle + \|y\|^2 \\
\text{iii)} & \|\alpha p + (1 - \alpha)y\|^2 = \alpha \|p\|^2 + (1 - \alpha)\|y\|^2 - \alpha(1 - \alpha)\|p - y\|^2.
\end{array}
\]

Let \( \mathcal{T}: H \rightarrow H \) be an operator on a real Hilbert space \( H \). The mapping \( \mathcal{T} \) is said to satisfy the following properties:

\begin{description}

\item[(i)] \( \mathcal{T} \) is called \emph{nonexpansive} if
\begin{equation}
\|\mathcal{T}p - \mathcal{T}y\| \leq \|p - y\|, \quad \forall p, y \in H. \label{def:nonexpansive}
\end{equation}

\item[(ii)] \( \mathcal{T} \) is said to be \emph{firmly nonexpansive} provided that
\begin{equation}
\|\mathcal{T}p - \mathcal{T}y\|^2 \leq \|p - y\|^2 - \|(I - \mathcal{T})p - (I - \mathcal{T})y\|^2, \quad \forall p, y \in H, \label{def:firm1}
\end{equation}
which is equivalent to
\begin{equation}
\|\mathcal{T}p - \mathcal{T}y\|^2 \leq \langle p - y, \mathcal{T}p - \mathcal{T}y \rangle. \label{def:firm2}
\end{equation}

\item[(iii)] For a given parameter \( \kappa \in (0,1) \), the operator \( \mathcal{T} \) is said to be \emph{\( \kappa \)-averaged} if there exists a nonexpansive mapping \( S: H \to H \) such that
\begin{equation}
\mathcal{T} = (1 - \kappa)I + \kappa S. \label{def:avg_def}
\end{equation}
Equivalently, the following inequality holds:
\begin{equation}
\|\mathcal{T}p - \mathcal{T}y\|^2 \leq \|p - y\|^2 - \frac{1 - \kappa}{\kappa} \|(I - \mathcal{T})p - (I - \mathcal{T})y\|^2, \quad \forall p, y \in H. \label{def:avg_ineq}
\end{equation}

\item[(iv)] The mapping \( \mathcal{T} \) is said to be \emph{\( L \)-Lipschitz continuous} for some constant \( L > 0 \) if
\begin{equation}
\|\mathcal{T}p - \mathcal{T}y\| \leq L \|p - y\|, \quad \forall p, y \in H. \label{def:lipschitz}
\end{equation}

\item[(v)] The operator \( \mathcal{T} \) is \emph{monotone} if it satisfies
\begin{equation}
\langle \mathcal{T}p - \mathcal{T}y, p - y \rangle \geq 0, \quad \forall p, y \in H. \label{def:monotone}
\end{equation}

\item[(vi)] \( \mathcal{T} \) is said to be \emph{\( \eta \)-strongly monotone} for some constant \( \eta > 0 \) if
\begin{equation}
\langle \mathcal{T}p - \mathcal{T}y, p - y \rangle \geq \eta \|p - y\|^2, \quad \forall p, y \in H. \label{def:strong_monotone}
\end{equation}

\item[(vii)] The mapping \( \mathcal{T} \) is \emph{\( \alpha \)-co-coercive} (also known as \emph{\( \alpha \)-inverse strongly monotone}) if there exists a constant \( \alpha > 0 \) such that
\begin{equation}
\langle \mathcal{T}p - \mathcal{T}y, p - y \rangle \geq \alpha \|\mathcal{T}p - \mathcal{T}y\|^2, \quad \forall p, y \in H. \label{def:cocoercive}
\end{equation}

\end{description}

\noindent A set-valued operator \( B: H \rightarrow 2^H \) is called \emph{monotone} if it satisfies
\[
\langle p - y, f - g \rangle \geq 0 \quad \text{for all } p, y \in H,\ f \in Bp,\ g \in By.
\]
The \emph{graph} of \( B \), denoted by \( \operatorname{Gr}(B) \), is defined as
\[
\operatorname{Gr}(B) := \{ (p, f) \in H \times H : f \in Bp \}.
\]
The operator \( B \) is said to be \emph{maximal monotone} if its graph is not properly contained in the graph of any other monotone operator.

\section{Main Results}

This section of the paper addresses a forward-backward splitting approach incorporating two inertial terms, aiming to identify a zero point of the sum of a maximally monotone operator and a co-coercive operator within a real Hilbert space. We then establish results concerning the weak convergence behavior of the sequence generated by the proposed method. The proposed method, denoted as Algorithm 1*, is detailed below:

\begin{algorithm}
\label{a1}(Algorithm 1*)%

\textbf{Step 1. }Select parameters $\delta \leq 0$, $\lambda \in (0,2\alpha)$, $\vartheta \in [0,1)$, and a sequence $(\mathcal{E}_k) \subset (0,1)$. Choose initial points $p_{-1}$, $p_0$, $p_1 \in H$ arbitrarily, and initialize $k = 1$.

\textbf{Step 2. }Update the iterates using the rules:
\begin{eqnarray*}
w_k &=& \vartheta(p_k - p_{k-1}) + \delta(p_{k-1} - p_{k-2})+p_k, \\
p_{k+1} &=& (1 - \mathcal{E}_k) w_k + \mathcal{E}_k \left( J_\lambda^B(w_k - \lambda A w_k) \right),
\end{eqnarray*}
where $J_\lambda^B = (I + \lambda B)^{-1}$ denotes the resolvent of the operator $B$.

\textbf{Step 3. }Increment $k$ by $1$ and return to Step 2.
\end{algorithm}

In the following section, we assume that certain standard conditions hold.

\textbf{Assumption A}

\begin{description}
\item[(i)] The operator $A: H \rightarrow H$ is assumed to be $\alpha$-cocoercive, and $B: H \rightarrow 2^{H}$ is a maximally monotone operator.

\item[(ii)] The set of zeros of the sum, denoted by $(A+B)^{-1}(0)$, corresponding to Problem (\ref{1.1}), is assumed to be nonempty.
\end{description}

We also require that the inertial parameters $\vartheta$, $\delta$, and the sequence $\{\mathcal{E}_k\}$ satisfy the conditions specified below.

\textbf{Assumption B}

\begin{description}
\item[(i)] There exist constants $0 < \mathcal{E}_1 < \mathcal{E}_k < \mathcal{E}_2 < 1$ for each $1 \leq k \leq 2$.

\item[(ii)] The parameter $\vartheta$ satisfies the inequality
\[
0 \leq \vartheta < \min \left\{ \frac{1}{3}, \frac{\mathcal{E}_1(1 - \kappa)}{\mathcal{E}_1(1 - \kappa) + 2\kappa} \right\}, \quad \text{where } \kappa := \frac{2\alpha}{4\alpha - \lambda}.
\]

\item[(iii)] The scalar $\delta \leq 0$ is chosen such that
\[
\max \left\{ -\frac{\mathcal{E}_1(1 - \kappa)(1 - \vartheta) - 2\kappa \vartheta}{\mathcal{E}_1(1 - \kappa)}, \frac{\vartheta(1 + \vartheta) - \mathcal{E}_1\left( \frac{1 - \kappa}{\kappa} \right)(1 - \vartheta)^2}{(1 + \vartheta)\left(1 + \mathcal{E}_1 \left( \frac{1 - \kappa}{\kappa} \right) \right)} \right\} < \delta,
\]
and in addition,
\[
\kappa \vartheta (1 + \vartheta) - \mathcal{E}_1(1 - \vartheta)^2(1 - \kappa) < \kappa \delta (1+2\vartheta - \delta) + 2\mathcal{E}_1(1 - \kappa)(1 + \vartheta)\delta + \mathcal{E}_1(1 - \kappa)\delta^2.
\]
\end{description}

\begin{lemma}
\label{Lem3.1} 
Under Assumptions A and B, the sequence $\{p_k\}$ produced by Algorithm 1* remains bounded.
\end{lemma}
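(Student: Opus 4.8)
The plan is to recast Algorithm 1* in terms of the forward--backward operator $T := J_\lambda^B(I - \lambda A)$ and to exploit its averagedness. First I would record that, since $A$ is $\alpha$-cocoercive and $\lambda \in (0,2\alpha)$, the map $I - \lambda A$ is $\frac{\lambda}{2\alpha}$-averaged, while the resolvent $J_\lambda^B$ is firmly nonexpansive, i.e. $\tfrac12$-averaged; composing the two and using the averagedness-of-composition rule gives that $T$ is $\kappa$-averaged with exactly $\kappa = \frac{2\alpha}{4\alpha - \lambda}$, the constant appearing in Assumption B. Moreover $\operatorname{Fix}(T) = (A+B)^{-1}(0)$, which is nonempty by Assumption A(ii), so we may fix $z \in \operatorname{Fix}(T)$ once and for all.

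Next I would derive the one-step energy estimate. Writing the update as $p_{k+1} = (1-\mathcal{E}_k)w_k + \mathcal{E}_k Tw_k$, applying identity (iii) of Section~2 with weight $\mathcal{E}_k$, and invoking the averagedness inequality (\ref{def:avg_ineq}) at $y = z$ (so that $Tz = z$), one obtains
\begin{equation*}
\|p_{k+1}-z\|^2 \le \|w_k - z\|^2 - \mathcal{E}_k\Big(\tfrac{1-\kappa}{\kappa} + 1 - \mathcal{E}_k\Big)\|(I-T)w_k\|^2 ,
\end{equation*}
where the coefficient of the last term is strictly positive because $\mathcal{E}_k,\kappa \in (0,1)$. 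This negative term must be retained rather than discarded: since $p_{k+1}-w_k = -\mathcal{E}_k(I-T)w_k$, it can be rewritten in terms of $\|p_{k+1}-w_k\|^2$, and it is precisely this quantity that will absorb the surplus generated by the inertial terms.

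The core of the argument is the expansion of $\|w_k - z\|^2$. Writing $w_k - z = (1+\vartheta)(p_k - z) - \vartheta(p_{k-1}-z) + \delta(p_{k-1}-p_{k-2})$, I would apply identity (iii) to the affine combination $(1+\vartheta)(p_k-z) - \vartheta(p_{k-1}-z)$ and then the polarization identity $2\langle a,b\rangle = \|a\|^2 + \|b\|^2 - \|a-b\|^2$ to the remaining cross terms, reducing everything to a linear combination of $\|p_k - z\|^2$, $\|p_{k-1}-z\|^2$, $\|p_k - p_{k-1}\|^2$ and $\|p_{k-1}-p_{k-2}\|^2$. Substituting into the energy estimate and collecting terms, I would introduce a Lyapunov sequence of the form
\begin{equation*}
\Gamma_k := \|p_k - z\|^2 - \vartheta\,\|p_{k-1}-z\|^2 + a\,\|p_k - p_{k-1}\|^2 + b\,\|p_{k-1}-p_{k-2}\|^2
\end{equation*}
and establish $\Gamma_{k+1} \le \Gamma_k$. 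This is where Assumption B enters decisively: the two-sided bound $\mathcal{E}_1 < \mathcal{E}_k < \mathcal{E}_2$ makes the relevant coefficients uniform, while the bound on $\vartheta$ in B(ii) and the two bounds on $\delta \le 0$ in B(iii) are exactly the inequalities that force $a,b \ge 0$ and make the leftover multiple of $\|(I-T)w_k\|^2$ (equivalently $\|p_{k+1}-w_k\|^2$) nonnegative, so that $\Gamma_k - \Gamma_{k+1}$ is a sum of nonnegative terms.

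Finally, monotonicity gives $\Gamma_k \le \Gamma_1$ for every $k$. Since $a,b \ge 0$, this yields $\|p_k - z\|^2 - \vartheta\|p_{k-1}-z\|^2 \le \Gamma_1$, and iterating this recursion under the constraint $\vartheta < \tfrac13 < 1$ from B(ii) bounds $\|p_k - z\|^2$ by $\vartheta^{k-1}\|p_1 - z\|^2 + \frac{\Gamma_1}{1-\vartheta}$, a finite quantity; hence $\{p_k\}$ is bounded. I expect the main obstacle to lie in the third paragraph — the bookkeeping needed to match the signs of the collected coefficients precisely to the algebraic conditions of Assumption B — rather than in any conceptual step, since the averagedness reduction and the final geometric-series argument are routine.
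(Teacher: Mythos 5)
Your first two steps are sound and coincide with the paper's: the composition rule gives precisely $\kappa=\frac{2\alpha}{4\alpha-\lambda}$ (this is what the paper imports from [\cite{2}, Theorem 7]), and your one-step estimate $\|p_{k+1}-z\|^2\le\|w_k-z\|^2-\mathcal{E}_k\left(\frac{1-\kappa}{\kappa}+1-\mathcal{E}_k\right)\|(I-T)w_k\|^2$ is correct, in fact slightly sharper than the paper's \eqref{3.7}. The genuine gap is in your third paragraph, and it is structural rather than bookkeeping: your Lyapunov sequence carries only the two anchors $\|p_k-z\|^2$ and $\|p_{k-1}-z\|^2$, but for $\delta<0$ the expansion of $\|w_k-z\|^2$ cannot be reduced to the four quantities you list. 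The troublesome cross term is $2\delta\langle p_k-z,\,p_{k-1}-p_{k-2}\rangle$: exact polarization turns it into either the third anchor $\|p_{k-2}-z\|^2$ or the irreducible quantity $\|(p_k-z)-(p_{k-1}-p_{k-2})\|^2$, which enters with the nonnegative weight $-\delta$ and therefore cannot be discarded when upper-bounding; estimating it instead by Cauchy--Schwarz/Young produces an extra $+|\delta|\,\|p_k-z\|^2$ (or $+|\delta|\,\|p_{k-1}-z\|^2$) on the right-hand side. After subtracting the shifted Lyapunov terms, establishing $\Gamma_{k+1}\le\Gamma_k$ then requires an inequality of the form $|\delta|\,\|p_k-z\|^2\le(\text{increment terms})$, and no choice of $a,b$, nor the retained negative multiple of $\|p_{k+1}-w_k\|^2$, can deliver this: terms built from the increments $p_{k+1}-p_k$, $p_k-p_{k-1}$, $p_{k-1}-p_{k-2}$ cannot dominate a distance-to-$z$ term unless the sequence is already known to be bounded, which is exactly what is being proved. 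So your monotonicity claim fails in the double-inertial regime $\delta<0$ that the lemma is about (for $\delta=0$ your argument is fine).

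The missing idea is the paper's three-anchor Lyapunov function. Writing $w_k-z=(1+\vartheta)(p_k-z)-(\vartheta-\delta)(p_{k-1}-z)-\delta(p_{k-2}-z)$, whose coefficients sum to $1$, the exact three-point analogue of identity (iii) (the paper's \eqref{3.2}) converts all cross terms into genuine difference terms, the resulting $\delta(\cdot)\|p_k-p_{k-2}\|^2$ contribution being $\le 0$ and hence droppable, and one takes $\Gamma_k=\|p_k-z\|^2-\vartheta\|p_{k-1}-z\|^2-\delta\|p_{k-2}-z\|^2+\mathcal{E}_1\frac{1-\kappa}{\kappa}\left(1-|\delta|-\vartheta\right)\|p_k-p_{k-1}\|^2$, further corrected by $c_1\|p_{k-1}-p_{k-2}\|^2$ as in the paper's \eqref{3.14}; since $-\delta\ge0$, the extra anchor costs nothing. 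With that $\Gamma_k$ your concluding step survives essentially unchanged: drop $-\delta\|p_{k-2}-z\|^2\ge0$ to get $\|p_k-z\|^2\le\vartheta\|p_{k-1}-z\|^2+\Gamma_1$ and iterate. That geometric-series finish is actually cleaner than the paper's own endgame, which instead proves $\|p_k-p_{k-1}\|\to0$ and the existence of $\lim_{k\rightarrow\infty}\Gamma_k$ and then reads boundedness off \eqref{3.9}.
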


\begin{proof}
Observe that, according to [\cite{2}, Theorem 7], the operator $J_{\lambda}^{B}(I-\lambda A)$ is $\frac{2\alpha}{4\alpha - \lambda}$-averaged. As a result, Algorithm 1* can be equivalently expressed through a fixed point iteration scheme described by:
\begin{eqnarray}
w_k &=& \vartheta (p_k - p_{k-1}) + \delta (p_{k-1} - p_{k-2}) + p_k, \\
p_{k+1} &=& (1 - \mathcal{E}_k)w_k + \mathcal{E}_k \mathcal{T} w_k,
\label{3.1}
\end{eqnarray}
in here $\mathcal{T} := J_{\lambda}^{B}(I - \lambda A)$. Assume that $p^* \in F(\mathcal{T}) = (A+B)^{-1}(0)$. Based on this, we obtain
\begin{eqnarray*}
w_k - p^* &=& p_k + \vartheta(p_k - p_{k-1}) + \delta(p_{k-1} - p_{k-2}) - p^* \\
&=& (1 + \vartheta)(p_k - p^*) - (\vartheta - \delta)(p_{k-1} - p^*) - \delta(p_{k-2} - p^*).
\end{eqnarray*}

Consequently, we have%
\begin{eqnarray}
\left\Vert w_{k}-p^{\ast }\right\Vert ^{2} &=&\left\Vert \left( 1+\vartheta
\right) \left( p_{k}-p^{\ast }\right) -\left( \vartheta -\delta \right) \left(
p_{k-1}-p^{\ast }\right) -\delta \left( p_{k-2}-p^{\ast }\right) \right\Vert
^{2}  \nonumber \\
&=&\left( 1+\vartheta \right) \left\Vert p_{k}-p^{\ast }\right\Vert ^{2}-\left(
\vartheta -\delta \right) \left\Vert p_{k-1}-p^{\ast }\right\Vert ^{2}-\delta
\left\Vert p_{k-2}-p^{\ast }\right\Vert ^{2}  \nonumber \\
&&+\left( 1+\vartheta \right) \left( \vartheta -\delta \right) \left\Vert
p_{k}-p_{k-1}\right\Vert ^{2}+\delta \left( 1-\vartheta \right) \left\Vert
p_{k}-p_{k-2}\right\Vert ^{2}  \nonumber \\
&&-\delta \left( \vartheta -\delta \right) \left\Vert
p_{k-1}-p_{k-2}\right\Vert ^{2}\text{.}  \label{3.2}
\end{eqnarray}%
Observe that%
\begin{eqnarray*}
2\vartheta \left\langle p_{k+1}-p_{k},p_{k}-p_{k-1}\right\rangle
&=&2\left\langle \vartheta \left( p_{k+1}-p_{k}\right)
,p_{k}-p_{k-1}\right\rangle \\
&\leq &2\left\vert \vartheta \right\vert \left\Vert p_{k+1}-p_{k}\right\Vert
\left\Vert p_{k}-p_{k-1}\right\Vert \\
&=&2\vartheta \left\Vert p_{k+1}-p_{k}\right\Vert \left\Vert
p_{k}-p_{k-1}\right\Vert \text{,}
\end{eqnarray*}%
and so%
\begin{equation}
-2\vartheta \left\langle p_{k+1}-p_{k},p_{k}-p_{k-1}\right\rangle \geq -2\vartheta
\left\Vert p_{k+1}-p_{k}\right\Vert \left\Vert p_{k}-p_{k-1}\right\Vert
\text{.}  \label{3.3}
\end{equation}%
Also, we can write%
\begin{eqnarray*}
2\delta \left\langle p_{k+1}-p_{k},p_{k-1}-p_{k-2}\right\rangle
&=&2\left\langle \delta \left( p_{k+1}-p_{k}\right)
,p_{k-1}-p_{k-2}\right\rangle \\
&\leq &2\left\vert \delta \right\vert \left\Vert p_{k+1}-p_{k}\right\Vert
\left\Vert p_{k-1}-p_{k-2}\right\Vert
\end{eqnarray*}%
which implies that%
\begin{equation}
-2\delta \left\langle p_{k+1}-p_{k},p_{k-1}-p_{k-2}\right\rangle \geq
2\left\vert \delta \right\vert \left\Vert p_{k+1}-p_{k}\right\Vert
\left\Vert p_{k-1}-p_{k-2}\right\Vert \text{.}  \label{3.4}
\end{equation}%
Similarly, we note that%
\begin{eqnarray*}
2\delta \vartheta \left\langle p_{k-1}-p_{k},p_{k-1}-p_{k-2}\right\rangle
&=&2\left\langle \delta \vartheta \left( p_{k-1}-p_{k}\right)
,p_{k-1}-p_{k-2}\right\rangle \\
&\leq &2\left\vert \delta \right\vert \vartheta \left\Vert
p_{k-1}-p_{k}\right\Vert \left\Vert p_{k-1}-p_{k-2}\right\Vert \\
&=&2\left\vert \delta \right\vert \vartheta \left\Vert p_{k}-p_{k-1}\right\Vert
\left\Vert p_{k-1}-p_{k-2}\right\Vert \text{,}
\end{eqnarray*}%
and thus,%
\begin{eqnarray}
2\delta \vartheta \left\langle p_{k}-p_{k-1},p_{k-1}-p_{k-2}\right\rangle
&=&-2\delta \vartheta \left\langle p_{k-1}-p_{k},p_{k-1}-p_{k-2}\right\rangle
\nonumber \\
&\geq &-2\left\vert \delta \right\vert \vartheta \left\Vert
p_{k}-p_{k-1}\right\Vert \left\Vert p_{k-1}-p_{k-2}\right\Vert \text{.}
\label{3.5}
\end{eqnarray}%
By (\ref{3.3}),(\ref{3.4}) and (\ref{3.5}), we obtain%
\begin{eqnarray}
\left\Vert p_{k+1}-w_{k}\right\Vert ^{2} &=&\left\Vert p_{k+1}-\left(
p_{k}+\vartheta \left( p_{k}-p_{k-1}\right) +\delta \left(
p_{k-1}-p_{k-2}\right) \right) \right\Vert ^{2}  \nonumber \\
&=&\left\Vert p_{k+1}-p_{k}\right\Vert ^{2}-2\vartheta \left\langle
p_{k+1}-p_{k},p_{k}-p_{k-1}\right\rangle  \nonumber \\
&&-2\delta \left\langle p_{k+1}-p_{k},p_{k-1}-p_{k-2}\right\rangle +\vartheta
^{2}\left\Vert p_{k}-p_{k-1}\right\Vert ^{2}  \nonumber \\
&&-2\delta \vartheta \left\langle p_{k}-p_{k-1},p_{k-1}-p_{k-2}\right\rangle
+\delta ^{2}\left\Vert p_{k-1}-p_{k-2}\right\Vert ^{2}  \nonumber \\
&\geq &\left\Vert p_{k+1}-p_{k}\right\Vert ^{2}-2\vartheta \left\Vert
p_{k+1}-p_{k}\right\Vert \left\Vert p_{k}-p_{k-1}\right\Vert  \nonumber \\
&&-2\left\vert \delta \right\vert \left\Vert p_{k+1}-p_{k}\right\Vert
\left\Vert p_{k-1}-p_{k-2}\right\Vert +\vartheta ^{2}\left\Vert
p_{k}-p_{k-1}\right\Vert ^{2}  \nonumber \\
&&-2\left\vert \delta \right\vert \vartheta \left\Vert p_{k}-p_{k-1}\right\Vert
\left\Vert p_{k-1}-p_{k-2}\right\Vert +\delta ^{2}\left\Vert
p_{k-1}-p_{k-2}\right\Vert ^{2}  \nonumber \\
&\geq &\left\Vert p_{k+1}-p_{k}\right\Vert ^{2}-\vartheta \left\Vert
p_{k+1}-p_{k}\right\Vert ^{2}-\vartheta \left\Vert p_{k}-p_{k-1}\right\Vert ^{2}
\nonumber \\
&&-\left\vert \delta \right\vert \left\Vert p_{k+1}-p_{k}\right\Vert
^{2}-\left\vert \delta \right\vert \left\Vert p_{k-1}-p_{k-2}\right\Vert
^{2}\vartheta ^{2}\left\Vert p_{k}-p_{k-1}\right\Vert ^{2}  \nonumber \\
&&-\left\vert \delta \right\vert \vartheta \left\Vert p_{k}-p_{k-1}\right\Vert
^{2}-\left\vert \delta \right\vert \vartheta \left\Vert
p_{k-1}-p_{k-2}\right\Vert ^{2}+\delta ^{2}\left\Vert
p_{k-1}-p_{k-2}\right\Vert ^{2}  \nonumber \\
&\geq &\left( 1-\left\vert \delta \right\vert -\vartheta \right) \left\Vert
p_{k+1}-p_{k}\right\Vert ^{2}-\left( \vartheta ^{2}-\vartheta -\left\vert \delta
\right\vert \vartheta \right) \left\Vert p_{k}-p_{k-1}\right\Vert ^{2}
\nonumber \\
&&+\left( \delta ^{2}-\left\vert \delta \right\vert -\left\vert \delta
\right\vert \vartheta \right) \left\Vert p_{k-1}-p_{k-2}\right\Vert ^{2} \text{.}
\label{3.6}
\end{eqnarray}%
Given that $\mathcal{T}$ is a $\kappa$-averaged and quasi-nonexpansive mapping, we deduce that
\begin{eqnarray}
\left\Vert p_{k+1}-p^{\ast }\right\Vert ^{2} &=&\left\Vert \left( \left(
1-\mathcal{E}_k\right) w_{k}+\mathcal{E}_k\mathcal{T}w_{k}\right) -p^{\ast }\right\Vert ^{2}  \nonumber
\\
&=&\left( 1-\mathcal{E}_k\right) \left\Vert w_{k}-p^{\ast }\right\Vert
^{2}+\mathcal{E}_k\left\Vert \mathcal{T}w_{k}-p^{\ast }\right\Vert ^{2}-\mathcal{E}_k\left(
1-\mathcal{E}_k\right) \left\Vert w_{k}-\mathcal{T}w_{k}\right\Vert ^{2}  \nonumber \\
&\leq &\left( 1-\mathcal{E}_k\right) \left\Vert w_{k}-p^{\ast }\right\Vert
^{2}+\mathcal{E}_k\left( \left\Vert w_{k}-p^{\ast }\right\Vert ^{2}\right.  \nonumber
\\
&&\left. -\left( \frac{1-\kappa }{\kappa }\right) \left\Vert
w_{k}-\mathcal{T}w_{k}\right\Vert ^{2}\right) \text{.} \label{3.7}
\end{eqnarray}%
Using (\ref{3.2}) and (\ref{3.6}) in (\ref{3.7}), we obtain%
\begin{eqnarray*}
	\left\Vert p_{k+1}-p^{\ast }\right\Vert ^{2} &\leq &\left( 1-\mathcal{E}_k\right)
	\left\Vert w_{k}-p^{\ast }\right\Vert ^{2}+\mathcal{E}_k\left( \left\Vert
	w_{k}-p^{\ast }\right\Vert ^{2}-\left( \frac{1-\kappa }{\kappa }\right)
	\left\Vert w_{k}-\mathcal{T}w_{k}\right\Vert ^{2}\right) \\%
	&\leq &\left\Vert w_{k}-p^{\ast }\right\Vert ^{2}-\mathcal{E}_1\left( \frac{1-\kappa
	}{\kappa }\right) \left\Vert w_{k}-\mathcal{T}w_{k}\right\Vert ^{2} \\%
	&\leq &\left( 1+\vartheta \right) \left\Vert p_{k}-p^{\ast }\right\Vert
	^{2}-\left( \vartheta -\delta \right) \left\Vert p_{k-1}-p^{\ast }\right\Vert
	^{2}-\delta \left\Vert p_{k-2}-p^{\ast }\right\Vert ^{2} \\%
	&&+\left( 1+\vartheta \right) \left( \vartheta -\delta \right) \left\Vert
	p_{k}-p_{k-1}\right\Vert ^{2}+\delta \left( 1+\vartheta \right) \left\Vert
	p_{k}-p_{k-2}\right\Vert ^{2} \\%
	&&-\delta \left( \vartheta -\delta \right) \left\Vert
	p_{k-1}-p_{k-2}\right\Vert ^{2}+\delta \left( 1+\vartheta \right) \left\Vert
	p_{k}-p_{k-2}\right\Vert ^{2} \\%
	&&-\mathcal{E}_1\left( \frac{1-\kappa }{\kappa }\right) \left( \left( 1-\left\vert
	\delta \right\vert -\vartheta \right) \left\Vert p_{k+1}-p_{k}\right\Vert
	^{2}-\left( \vartheta ^{2}-\vartheta -\left\vert \delta \right\vert \vartheta \right)
	\left\Vert p_{k}-p_{k-1}\right\Vert ^{2}\right. \\%
	&&\left. +\left( \delta ^{2}-\left\vert \delta \right\vert -\left\vert
	\delta \right\vert \vartheta \right) \left\Vert p_{k-1}-p_{k-2}\right\Vert
	^{2}\right) \\%
	&=&\left( 1+\vartheta \right) \left\Vert p_{k}-p^{\ast }\right\Vert ^{2}-\left(
	\vartheta -\delta \right) \left\Vert p_{k-1}-p^{\ast }\right\Vert ^{2}-\delta
	\left\Vert p_{k-2}-p^{\ast }\right\Vert ^{2} \\%
	&&+\left( \left( 1+\vartheta \right) \left( \vartheta -\delta \right) -\mathcal{E}_1\left(
	\frac{1-\kappa }{\kappa }\right) \left( \vartheta ^{2}-\vartheta -\left\vert \delta
	\right\vert \vartheta \right) \right) \left\Vert p_{k}-p_{k-1}\right\Vert ^{2}
	\\%
	&&-\left( \mathcal{E}_1\left( \frac{1-\kappa }{\kappa }\right) \left( 1-\left\vert
	\delta \right\vert -\vartheta \right) \right) \left\Vert
	p_{k+1}-p_{k}\right\Vert ^{2} \\%
\end{eqnarray*}%
\begin{eqnarray*}
	&&-\left( \delta \left( \vartheta -\delta \right) +\mathcal{E}_1\left( \frac{1-\kappa }{%
		\kappa }\right) \left( \delta ^{2}-\left\vert \delta \right\vert -\left\vert
	\delta \right\vert \vartheta \right) \right) \left\Vert
	p_{k-1}-p_{k-2}\right\Vert ^{2} \\%
	&&+\delta \left( 1+\vartheta \right) \left\Vert p_{k}-p_{k-2}\right\Vert ^{2} \\%
	&\leq &\left( 1+\vartheta \right) \left\Vert p_{k}-p^{\ast }\right\Vert
	^{2}-\left( \vartheta -\delta \right) \left\Vert p_{k-1}-p^{\ast }\right\Vert
	^{2}-\delta \left\Vert p_{k-2}-p^{\ast }\right\Vert ^{2} \\%
	&&+\left( \left( 1+\vartheta \right) \left( \vartheta -\delta \right) -\mathcal{E}_1\left(
	\frac{1-\kappa }{\kappa }\right) \left( \vartheta ^{2}-\vartheta -\left\vert \delta
	\right\vert \vartheta \right) \right) \left\Vert p_{k}-p_{k-1}\right\Vert ^{2}
	\\%
	&&-\left( \mathcal{E}_1\left( \frac{1-\kappa }{\kappa }\right) \left( 1-\left\vert
	\delta \right\vert -\vartheta \right) \right) \left\Vert
	p_{k+1}-p_{k}\right\Vert ^{2} \\%
	&&-\left( \delta \left( \vartheta -\delta \right) +\mathcal{E}_1\left( \frac{1-\kappa }{%
		\kappa }\right) \left( \delta ^{2}-\left\vert \delta \right\vert -\left\vert
	\delta \right\vert \vartheta \right) \right) \left\Vert
	p_{k-1}-p_{k-2}\right\Vert ^{2} \text{.}
\end{eqnarray*}
Therefore, we have%
\begin{eqnarray}
&&\left\Vert p_{k+1}-p^{\ast }\right\Vert ^{2}-\vartheta \left\Vert
p_{k}-p^{\ast }\right\Vert ^{2}-\delta \left\Vert p_{k-1}-p^{\ast
}\right\Vert ^{2}+\left( \mathcal{E}_1\left( \frac{1-\kappa }{\kappa }\right) \left(
1-\left\vert \delta \right\vert -\vartheta \right) \right) \left\Vert
p_{k+1}-p_{k}\right\Vert ^{2}  \nonumber \\
&\leq &\left\Vert p_{k}-p^{\ast }\right\Vert ^{2}-\vartheta \left\Vert
p_{k-1}-p^{\ast }\right\Vert ^{2}-\delta \left\Vert p_{k-2}-p^{\ast
}\right\Vert ^{2}+\left( \mathcal{E}_1\left( \frac{1-\kappa }{\kappa }\right) \left(
1-\left\vert \delta \right\vert -\vartheta \right) \right) \left\Vert
p_{k}-p_{k-1}\right\Vert ^{2}  \nonumber \\
&&+\left( \left( 1+\vartheta \right) \left( \vartheta -\delta \right) -\mathcal{E}_1\left(
\frac{1-\kappa }{\kappa }\right) \left( \vartheta ^{2}-2\vartheta -\left\vert \delta
\right\vert \vartheta -\left\vert \delta \right\vert +1 \right)\right)
\left\Vert p_{k}-p_{k-1}\right\Vert ^{2}  \nonumber \\
&&-\left( \delta \left( \vartheta -\delta \right) +\mathcal{E}_1\left( \frac{1-\kappa }{%
\kappa }\right) \left( \delta ^{2}-\left\vert \delta \right\vert -\left\vert
\delta \right\vert \vartheta \right) \right) \left\Vert
p_{k-1}-p_{k-2}\right\Vert ^{2} \text{.}  \label{3.8}
\end{eqnarray}%
For each $\forall k\geq 1$, define%
\[
\Gamma _{k}:=\left\Vert p_{k}-p^{\ast }\right\Vert ^{2}-\vartheta \left\Vert
p_{k-1}-p^{\ast }\right\Vert ^{2}-\delta \left\Vert p_{k-2}-p^{\ast
}\right\Vert ^{2}+\left( \mathcal{E}_1\left( \frac{1-\kappa }{\kappa }\right) \left(
1-\left\vert \delta \right\vert -\vartheta \right) \right) \left\Vert
p_{k}-p_{k-1}\right\Vert ^{2} \text{.}
\]%
We begin by proving that $\Gamma_k \geq 0$ for all $k \geq 1$. Observe that
\[
\left\Vert p_{k-1}-p^{\ast }\right\Vert ^{2}\leq 2\left\Vert
p_{k}-p_{k-1}\right\Vert ^{2}+2\left\Vert p_{k}-p^{\ast }\right\Vert ^{2}%
\text{.}
\]%
Hence, we get%
\begin{eqnarray}
\Gamma _{k} &=&\left\Vert p_{k}-p^{\ast }\right\Vert ^{2}-\vartheta \left\Vert
p_{k-1}-p^{\ast }\right\Vert ^{2}-\delta \left\Vert p_{k-2}-p^{\ast
}\right\Vert ^{2}+\left( \mathcal{E}_1\left( \frac{1-\kappa }{\kappa }\right) \left(
1-\left\vert \delta \right\vert -\vartheta \right) \right) \left\Vert
p_{k}-p_{k-1}\right\Vert ^{2}  \nonumber \\
&\geq &\left\Vert p_{k}-p^{\ast }\right\Vert ^{2}-2\vartheta \left\Vert
p_{k}-p_{k-1}\right\Vert ^{2}-2\vartheta \left\Vert p_{k}-p^{\ast }\right\Vert
^{2}-\delta \left\Vert p_{k-2}-p^{\ast }\right\Vert ^{2}  \nonumber \\
&&+\left( \mathcal{E}_1\left( \frac{1-\kappa }{\kappa }\right) \left( 1-\left\vert
\delta \right\vert -\vartheta \right) \right) \left\Vert
p_{k}-p_{k-1}\right\Vert ^{2}  \nonumber \\
&=&\left( 1-2\vartheta \right) \left\Vert p_{k}-p^{\ast }\right\Vert
^{2}+\left( \mathcal{E}_1\left( \frac{1-\kappa }{\kappa }\right) \left( 1-\left\vert
\delta \right\vert -\vartheta \right) -2\vartheta \right) \left\Vert
p_{k}-p_{k-1}\right\Vert ^{2}  \nonumber \\
&&-\delta \left\Vert p_{k-2}-p^{\ast }\right\Vert ^{2}\text{.}  \label{3.9}
\end{eqnarray}%
From Assumption B (i)-(iii), we deduce that%
\[
\left\vert \delta \right\vert <\frac{\mathcal{E}_1\left( 1-\kappa \right) \left(
1-\vartheta \right) -2\kappa \vartheta }{\mathcal{E}_1\left( 1-\kappa \right) }\text{.}
\]%
As a result, it follows from (\ref{3.9}) that the inequality $\Gamma_k \geq 0$ holds for every $k \geq 0$. Given that%
\[
-\frac{\mathcal{E}_1\left( 1-\kappa \right) \left( 1-\vartheta \right) -2\kappa \vartheta }{%
\mathcal{E}_1\left( 1-\kappa \right) }<\delta
\]%
and $\vartheta <\frac{1}{3}$ based on Assumption B (i)-(iii). Presently,
we derive from (\ref{3.8}) that%
\begin{eqnarray}
\Gamma _{k+1}-\Gamma _{k} &\leq &\left( \left( 1+\vartheta \right) \left(
\vartheta -\delta \right) -\mathcal{E}_1\left( \frac{1-\kappa }{\kappa }\right) \left(
\vartheta ^{2}-2\vartheta -\left\vert \delta \right\vert \vartheta -\left\vert \delta
\right\vert +1\right) \right) \left\Vert p_{k}-p_{k-1}\right\Vert ^{2}
\nonumber \\
&&-\left( \delta \left( \vartheta -\delta \right) +\mathcal{E}_1\left( \frac{1-\kappa }{%
\kappa }\right) \left( \delta ^{2}-\left\vert \delta \right\vert -\left\vert
\delta \right\vert \vartheta \right) \right) \left\Vert
p_{k-1}-p_{k-2}\right\Vert ^{2}  \nonumber \\
&&-\left( \left( 1+\vartheta \right) \left( \vartheta -\delta \right) -\mathcal{E}_1\left(
\frac{1-\kappa }{\kappa }\right) \left( \vartheta ^{2}-2\vartheta -\left\vert \delta
\right\vert \vartheta -\left\vert \delta \right\vert +1\right) \right)
\nonumber \\
&&\left( \left\Vert p_{k-1}-p_{k-2}\right\Vert ^{2}-\left\Vert
p_{k}-p_{k-1}\right\Vert ^{2}\right)  \nonumber \\
&&+\left( \left( 1+\vartheta \right) \left( \vartheta -\delta \right) -\mathcal{E}_1\left(
\frac{1-\kappa }{\kappa }\right) \left( \vartheta ^{2}-2\vartheta -\left\vert \delta
\right\vert \vartheta -\left\vert \delta \right\vert +1\right) -\delta \left(
\vartheta -\delta \right) \right.  \nonumber \\
&&\left. -\mathcal{E}_1\left( \frac{1-\kappa }{\kappa }\right) \left( \delta
^{2}-\left\vert \delta \right\vert -\left\vert \delta \right\vert \vartheta
\right) \right) \left\Vert p_{k-1}-p_{k-2}\right\Vert ^{2}  \nonumber \\
&\leq &c_{1}\left( \left\Vert p_{k-1}-p_{k-2}\right\Vert ^{2}-\left\Vert
p_{k}-p_{k-1}\right\Vert ^{2}\right) -c_{2}\left\Vert
p_{k-1}-p_{k-2}\right\Vert ^{2} \text{,}  \label{3.10}
\end{eqnarray}%
where%
\begin{eqnarray*}
c_{1} &=&-\left( \left( 1+\vartheta \right) \left( \vartheta -\delta \right)
-\mathcal{E}_1\left( \frac{1-\kappa }{\kappa }\right) \left( \vartheta ^{2}-2\vartheta
-\left\vert \delta \right\vert \vartheta -\left\vert \delta \right\vert
+1\right) \right) \\
c_{2} &=&-\left( \left( 1+\vartheta \right) \left( \vartheta -\delta \right)
-\mathcal{E}_1\left( \frac{1-\kappa }{\kappa }\right) \left( \vartheta ^{2}-2\vartheta
-\left\vert \delta \right\vert \vartheta -\left\vert \delta \right\vert
+1\right) -\delta \left( \vartheta -\delta \right) \right. \\
&&\left. -\mathcal{E}_1\left( \frac{1-\kappa }{\kappa }\right) \left( \delta
^{2}-\left\vert \delta \right\vert -\left\vert \delta \right\vert \vartheta
\right) \right) \text{.}
\end{eqnarray*}%
Considering $\left\vert \delta \right\vert =-\delta $, we obtain that%
\[
-\left( \left( 1+\vartheta \right) \left( \vartheta -\delta \right) -\mathcal{E}_1\left(
\frac{1-\kappa }{\kappa }\right) \left( \vartheta ^{2}-2\vartheta -\left\vert \delta
\right\vert \vartheta -\left\vert \delta \right\vert +1\right) \right) >0
\]%
which is tantamount to
\begin{equation}
\delta >\frac{\vartheta \left( 1+\vartheta \right) -\mathcal{E}_1\left( \frac{1-\kappa }{%
\kappa }\right) \left( 1-\vartheta \right) ^{2}}{\left( 1+\vartheta \right) \left(
1+\mathcal{E}_1\left( \frac{1-\kappa }{\kappa }\right) \right) } \text{.} \label{3.11}
\end{equation}%
So, according to Assumption B (iii), we observe that $c_{1}>0$. Now, we show that $c_{2}>0$. Since,%
\begin{eqnarray}
&&\left( \left( 1+\vartheta \right) \left( \delta -\vartheta \right) +\mathcal{E}_1\left(
\frac{1-\kappa }{\kappa }\right) \left( \vartheta ^{2}-2\vartheta -\left\vert \delta
\right\vert \vartheta -\left\vert \delta \right\vert +1\right) +\delta \left(
\vartheta -\delta \right) \right.  \nonumber \\
&&\left. +\mathcal{E}_1\left( \frac{1-\kappa }{\kappa }\right) \left( \delta
^{2}-\left\vert \delta \right\vert -\left\vert \delta \right\vert \vartheta
\right) \right)  \nonumber \\
&>&0  \text{,}\label{3.12}
\end{eqnarray}%
we get%
\begin{eqnarray}
	\delta \left( \vartheta -\delta \right) +\delta \left( \vartheta +1\right) \left[
	1+\mathcal{E}_1\left( \frac{1-\kappa }{\kappa }\right) \right] +\mathcal{E}_1\left( \frac{
		1-\kappa }{\kappa }\right) \left( \delta ^{2}+\delta \left( \vartheta +1\right)
	\right) \nonumber \\
	< \vartheta \left( \vartheta +1\right) -\mathcal{E}_1\left( \frac{1-\kappa }{\kappa }
	\right) \left( \vartheta -1\right) ^{2} \label{3.13}
\end{eqnarray}
So, it is clear from Assumption B (iii) that $c_{2}>0$. On the other hand, we derive from (\ref{3.10})that%
\begin{equation}
\Gamma _{k+1}+c_{1}\left\Vert p_{k}-p_{k-1}\right\Vert ^{2}\leq \Gamma
_{k}+c_{1}\left\Vert p_{k-1}-p_{k-2}\right\Vert ^{2}-c_{2}\left\Vert
p_{k-1}-p_{k-2}\right\Vert ^{2}\text{.}  \label{3.14}
\end{equation}%
By defining $\overline{\Gamma }_{k}:=\Gamma _{k}+c_{1}\left\Vert
p_{k-1}-p_{k-2}\right\Vert ^{2}$, we deduce from (\ref{3.14}) that $\overline{%
\Gamma }_{k+1}\leq \overline{\Gamma }_{k}$, which indicates that the
sequence $\left\{ \overline{\Gamma }_{k}\right\} $ is decreasing and
therefore, the limit $\lim_{k\rightarrow \infty }\overline{\Gamma }_{k}$
exists. Consequently, based on (\ref{3.14}), we conclude that $%
\lim_{k\rightarrow \infty }\Gamma _{k}+c_{1}\left\Vert
p_{k-1}-p_{k-2}\right\Vert ^{2}=0$. Thus, we have%
\begin{equation}
\lim_{k\rightarrow \infty }\left\Vert p_{k-1}-p_{k-2}\right\Vert =0\text{.}
\label{3.15}
\end{equation}%
By employing (\ref{3.15}) and considering the existence of the limit of $%
\left\{ \overline{\Gamma }_{k}\right\} $, it follows that%
\begin{eqnarray}
	\lim_{k\rightarrow \infty }\Gamma _{k} &=& \lim_{k\rightarrow \infty }
	\left( \left\Vert p_{k}-p^{\ast }\right\Vert ^{2}-\vartheta \left\Vert
	p_{k-1}-p^{\ast }\right\Vert ^{2}-\delta \left\Vert p_{k-2}-p^{\ast
	}\right\Vert ^{2} \right.  \nonumber \\
	&& \left. + \mathcal{E}_1\left( \frac{1-\kappa }{\kappa }\right) \left( 1-\left\vert
	\delta \right\vert -\vartheta \right) \left\Vert p_{k}-p_{k-1}\right\Vert ^{2} \right)
	\label{3.16}
\end{eqnarray}
exists. Also, we have%
\begin{eqnarray*}
\left\Vert p_{k+1}-w_{k}\right\Vert &=&\left\Vert p_{k+1}-p_{k}-\vartheta
\left( p_{k}-p_{k-1}\right) -\delta \left( p_{k-1}-p_{k-2}\right) \right\Vert
\\
&\leq &\left\Vert p_{k+1}-p_{k}\right\Vert -\vartheta \left\Vert
p_{k}-p_{k-1}\right\Vert -\left\vert \delta \right\vert \left\Vert
p_{k-1}-p_{k-2}\right\Vert \rightarrow 0\text{, }k\rightarrow \infty.
\end{eqnarray*}%
So, we obtain $\lim_{k\rightarrow \infty }\left\Vert w_{k}-\mathcal{T}w_{k}\right\Vert
=0$. It is also worth noting that
\begin{eqnarray*}
\left\Vert w_{k}-p_{k}\right\Vert &=&\left\Vert p_{k}+\vartheta \left(
p_{k}-p_{k-1}\right) +\delta \left( p_{k-1}-p_{k-2}\right) -p_{k}\right\Vert
\\
&=&\left\Vert \vartheta \left( p_{k}-p_{k-1}\right) +\delta \left(
p_{k-1}-p_{k-2}\right) \right\Vert \\
&\leq &\vartheta \left\Vert p_{k}-p_{k-1}\right\Vert +\left\vert \delta
\right\vert \left\Vert p_{k-1}-p_{k-2}\right\Vert \rightarrow 0\text{, }%
k\rightarrow \infty.
\end{eqnarray*}%
Since the limit $\lim_{k\rightarrow \infty }\Gamma _{k}$ exists and $%
\lim_{k\rightarrow \infty }\left\Vert p_{k}-p_{k-1}\right\Vert =0$, it
follows from (\ref{3.9}) that $\left\{ p_{k}\right\} $ is bounded.
\end{proof}
Now, we prove the weak convergence of the sequence generated by Algorithm 1*.
\begin{theorem}
\textbf{\label{The3.1} } Assuming that Assumption A holds, the sequence $\{p_k\}$ produced by Algorithm 1* converges weakly to an element of $(A+B)^{-1}(0)$.
\end{theorem}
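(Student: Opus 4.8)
The plan is to deduce weak convergence from Opial's lemma, which requires two facts: that $\lim_{k\to\infty}\|p_k-p^\ast\|$ exists for every $p^\ast\in F(\mathcal{T})=(A+B)^{-1}(0)$, and that every weak sequential cluster point of $\{p_k\}$ belongs to $F(\mathcal{T})$. Lemma \ref{Lem3.1} and its proof already furnish the essentials: $\{p_k\}$ is bounded, $\lim_{k\to\infty}\|p_k-p_{k-1}\|=0$ (so also $\|p_k-p_{k-2}\|\to0$), $\|w_k-p_k\|\to0$, $\|w_k-\mathcal{T}w_k\|\to0$, and $\lim_{k\to\infty}\Gamma_k$ exists.

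For the first fact, fix $p^\ast\in F(\mathcal{T})$ and write $a_k:=\|p_k-p^\ast\|^2$. Boundedness together with $\|p_k-p_{k-1}\|\to0$ gives, via the reverse triangle inequality, $|a_k-a_{k-1}|\le\|p_k-p_{k-1}\|\,(\|p_k-p^\ast\|+\|p_{k-1}-p^\ast\|)\to0$, and likewise $a_k-a_{k-2}\to0$. I would then recast the functional from (\ref{3.16}) in the algebraically equivalent form
\[
\Gamma_k=(1-\vartheta-\delta)a_k+\vartheta(a_k-a_{k-1})+\delta(a_k-a_{k-2})+\mathcal{E}_1\left(\frac{1-\kappa}{\kappa}\right)(1-|\delta|-\vartheta)\,\|p_k-p_{k-1}\|^2,
\]
in which every summand except $(1-\vartheta-\delta)a_k$ tends to $0$. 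Since Assumption B forces $\vartheta<\tfrac13$ and $\delta\le0$, we have $1-\vartheta-\delta\ge\tfrac23>0$, so $a_k\to\lim_k\Gamma_k/(1-\vartheta-\delta)$; in particular $\lim_{k\to\infty}\|p_k-p^\ast\|$ exists.

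For the second fact, let $q$ be a weak cluster point of the bounded sequence $\{p_k\}$, say $p_{k_j}\rightharpoonup q$. From $\|w_k-p_k\|\to0$ we obtain $w_{k_j}\rightharpoonup q$, while $\|w_k-\mathcal{T}w_k\|\to0$ shows $(I-\mathcal{T})w_{k_j}\to0$ strongly. Because $\mathcal{T}=J_\lambda^B(I-\lambda A)$ is $\kappa$-averaged and therefore nonexpansive, the demiclosedness principle guarantees that $I-\mathcal{T}$ is demiclosed at $0$; hence $(I-\mathcal{T})q=0$, that is, $q\in F(\mathcal{T})=(A+B)^{-1}(0)$. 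Combining the two facts, Opial's lemma yields a unique weak cluster point and weak convergence of the whole sequence to it, the uniqueness following from the identity $\|p_k-q_1\|^2-\|p_k-q_2\|^2=2\langle p_k,q_2-q_1\rangle+\|q_1\|^2-\|q_2\|^2$ evaluated along subsequences converging weakly to two candidate limits $q_1,q_2$.

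The hard part will be the decoupling in the first fact: the Lyapunov quantity $\Gamma_k$ entangles three consecutive distance terms weighted by the inertial parameters, so convergence of $\Gamma_k$ does not immediately transfer to convergence of $\|p_k-p^\ast\|$. The leverage is precisely that $\|p_k-p_{k-1}\|\to0$ annihilates the differences $a_k-a_{k-1}$ and $a_k-a_{k-2}$, while the constraints on $\vartheta$ and $\delta$ in Assumption B keep $1-\vartheta-\delta$ bounded away from zero; the remaining steps, demiclosedness and Opial's lemma, are standard.
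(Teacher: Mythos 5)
Your proof is correct, and it differs from the paper's own argument in the decisive step. Both proofs draw the same ingredients from Lemma \ref{Lem3.1}: boundedness of $\{p_k\}$, $\|p_k-p_{k-1}\|\rightarrow 0$ from (\ref{3.15}), $\|w_k-p_k\|\rightarrow 0$, $\|w_k-\mathcal{T}w_k\|\rightarrow 0$, and the existence of $\lim_{k\rightarrow\infty}\Gamma_k$ from (\ref{3.16}); and both identify weak cluster points as elements of $F(\mathcal{T})$ via the demiclosedness principle for the nonexpansive (averaged) operator $\mathcal{T}$. The divergence is in how uniqueness of the weak limit is obtained. The paper never establishes that $\lim_{k\rightarrow\infty}\|p_k-p^\ast\|$ exists for an individual solution $p^\ast$; it keeps the entangled quantity $\|p_k-p^\ast\|^2-\vartheta\|p_{k-1}-p^\ast\|^2-\delta\|p_{k-2}-p^\ast\|^2$ and re-proves an Opial-type statement by hand, using the polarization identities (\ref{3.17})--(\ref{3.19}) to show that $\lim_{k\rightarrow\infty}\langle p_k-\vartheta p_{k-1}-\delta p_{k-2},\,p^\ast-v^\ast\rangle$ exists, then evaluating this limit along two weakly convergent subsequences to force $(1-\vartheta-\delta)\|p^\ast-v^\ast\|^2=0$. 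You instead decouple: boundedness plus $\|p_k-p_{k-1}\|\rightarrow 0$ make the differences $\|p_k-p^\ast\|^2-\|p_{k-1}-p^\ast\|^2$ and $\|p_k-p^\ast\|^2-\|p_{k-2}-p^\ast\|^2$ vanish, so the entangled limit collapses to $(1-\vartheta-\delta)\lim_{k\rightarrow\infty}\|p_k-p^\ast\|^2$ with $1-\vartheta-\delta>\frac{2}{3}>0$ by Assumption B, which is exactly the hypothesis of the standard Opial lemma. Your route is more modular: it isolates the cleaner and slightly stronger intermediate fact that the distance to each solution converges, and then quotes a textbook result, while also making explicit that the inertial perturbation is asymptotically negligible in the Fej\'{e}r-type estimate. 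The paper's route avoids the decoupling computation at the price of repeating Opial's polarization argument for the three-term combination; note that it silently uses the same leverage you use explicitly, namely $\|p_k-p_{k-1}\|\rightarrow 0$, to ensure that $p_{k_n-1}$ and $p_{k_n-2}$ inherit the weak limit of $p_{k_n}$, so the two proofs rest on identical facts and differ only in packaging.
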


\begin{proof}Applying (\ref{3.15}) within (\ref{3.16}), we establish the
existence of the following limit as $k$ tends to infinity:%
\[
\lim_{k\rightarrow \infty }\left[ \left\Vert p_{k}-p^{\ast }\right\Vert
^{2}-\vartheta \left\Vert p_{k-1}-p^{\ast }\right\Vert ^{2}-\delta \left\Vert
p_{k-2}-p^{\ast }\right\Vert ^{2}\right] \text{.}
\]%
According to Lemma \ref{Lem3.1}, we know that $\left\{ p_{k}\right\} $ is bounded.
Initially, we establish that any weak cluster point of the sequence $\{p_k\}$ belongs to $F(\mathcal{T})$, where the operator $\mathcal{T}$ is defined in (\ref{3.1}).
Assume $\left\{ p_{k_{n}}\right\} \subset \left\{ p_{k}\right\} $ and $%
p_{k_{n}}\rightharpoonup v^{\ast }\in H$. Since $\left\Vert
w_{k}-p_{k}\right\Vert \rightarrow 0$, $k\rightarrow \infty $, we have $%
w_{k_{n}}\rightharpoonup v^{\ast }\in H$. Eventually $\left\Vert
w_{k}-\mathcal{T}w_{k}\right\Vert \rightarrow 0$, $k\rightarrow \infty $, and by the
demiclosedness of $\mathcal{T}$, we deduce that $v^{\ast }\in F(\mathcal{T})=\left( A+B\right)
^{-1}\left( 0\right) $. Now, we show that $p_{k}\rightharpoonup p^{\ast
}\in F(\mathcal{T})$. Let assume the existence of $\left\{ p_{k_{n}}\right\} \subset
\left\{ p_{k}\right\} $ and $\left\{ p_{k_{j}}\right\} \subset \left\{
p_{k}\right\} $ such that $p_{k_{n}}\rightharpoonup v^{\ast }$, $%
n\rightarrow \infty $ and $p_{k_{j}}\rightharpoonup p^{\ast }$, $%
j\rightarrow \infty $. We show that $v^{\ast }=p^{\ast }$%
\begin{eqnarray}
2\left\langle p_{k},p^{\ast }-v^{\ast }\right\rangle &=&\left\Vert
p_{k}-v^{\ast }\right\Vert ^{2}-\left\Vert p_{k}-p^{\ast }\right\Vert ^{2}
\nonumber \\
&&-\left\Vert v^{\ast }\right\Vert ^{2}+\left\Vert p^{\ast }\right\Vert ^{2}
\label{3.17} \\
2\left\langle -\vartheta p_{k-1},p^{\ast }-v^{\ast }\right\rangle &=&-\vartheta
\left\Vert p_{k-1}-v^{\ast }\right\Vert ^{2}+\vartheta \left\Vert
p_{k-1}-p^{\ast }\right\Vert ^{2}  \nonumber \\
&&+\vartheta \left\Vert v^{\ast }\right\Vert ^{2}-\vartheta \left\Vert p^{\ast
}\right\Vert ^{2}  \label{3.18}
\end{eqnarray}%
and%
\begin{eqnarray}
2\left\langle -\delta p_{k-2},p^{\ast }-v^{\ast }\right\rangle &=&-\delta
\left\Vert p_{k-2}-v^{\ast }\right\Vert ^{2}+\delta \left\Vert
p_{k-2}-p^{\ast }\right\Vert ^{2}  \nonumber \\
&&+\delta \left\Vert v^{\ast }\right\Vert ^{2}-\delta \left\Vert p^{\ast
}\right\Vert ^{2}\text{.}  \label{3.19}
\end{eqnarray}%
Addition of (\ref{3.17}), (\ref{3.18}), and (\ref{3.19}) gives%
\begin{eqnarray*}
2\left\langle p_{k}-\vartheta p_{k-1}-\delta p_{k-2},p^{\ast }-v^{\ast
}\right\rangle &=&\left( \left\Vert p_{k}-v^{\ast }\right\Vert ^{2}-\vartheta
\left\Vert p_{k-1}-v^{\ast }\right\Vert ^{2}-\delta \left\Vert
p_{k-2}-v^{\ast }\right\Vert ^{2}\right) \\
&&-\left( \left\Vert p_{k}-p^{\ast }\right\Vert ^{2}-\vartheta \left\Vert
p_{k-1}-p^{\ast }\right\Vert ^{2}-\delta \left\Vert p_{k-2}-p^{\ast
}\right\Vert ^{2}\right) \\
&&+\left( 1-\vartheta -\delta \right) \left( \left\Vert p^{\ast }\right\Vert
^{2}-\left\Vert v^{\ast }\right\Vert ^{2}\right) \text{.}
\end{eqnarray*}%
According to (\ref{3.16}), the following limits are exist:%
\[
\lim_{k\rightarrow \infty }\left\Vert p_{k}-p^{\ast }\right\Vert ^{2}-\vartheta
\left\Vert p_{k-1}-p^{\ast }\right\Vert ^{2}-\delta \left\Vert
p_{k-2}-p^{\ast }\right\Vert ^{2}
\]%
exists and%
\[
\lim_{k\rightarrow \infty }\left\Vert p_{k}-v^{\ast }\right\Vert ^{2}-\vartheta
\left\Vert p_{k-1}-v^{\ast }\right\Vert ^{2}-\delta \left\Vert
p_{k-2}-v^{\ast }\right\Vert ^{2}\text{.}
\]%
This implies that%
\[
\lim_{k\rightarrow \infty }\left\langle p_{k}-\vartheta p_{k-1}-\delta
p_{k-2},p^{\ast }-v^{\ast }\right\rangle
\]%
exists. So, we have%
\begin{eqnarray*}
\left\langle v^{\ast }-\vartheta v^{\ast }-\delta v^{\ast },p^{\ast }-v^{\ast
}\right\rangle &=&\lim_{n\rightarrow \infty }\left\langle p_{k_{n}}-\vartheta
p_{k_{n}-1}-\delta p_{k_{n}-2},p^{\ast }-v^{\ast }\right\rangle \\
&=&\lim_{k\rightarrow \infty }\left\langle p_{k}-\vartheta p_{k-1}-\delta
p_{k-2},p^{\ast }-v^{\ast }\right\rangle \\
&=&\lim_{j\rightarrow \infty }\left\langle p_{k_{j}}-\vartheta
p_{k_{j}-1}-\delta p_{k_{j}-2},p^{\ast }-v^{\ast }\right\rangle \\
&=&\left\langle p^{\ast }-\vartheta p^{\ast }-\delta p^{\ast },p^{\ast
}-v^{\ast }\right\rangle \text{,}
\end{eqnarray*}%
which yields%
\[
\left( 1-\vartheta -\delta \right) \left\Vert p^{\ast }-v^{\ast }\right\Vert
^{2}=0\text{.}
\]%
Since $\delta \leq 0<1-\vartheta $, we obtain that $p^{\ast }=v^{\ast }$.
Hence, $\left\{ p_{k}\right\} $ converges weakly to a point in $F(\mathcal{T})=\left(
A+B\right) ^{-1}\left( 0\right) $.
\end{proof}

\section{Numerical Simulations}

We designed a new algorithm called as Algorithm 1* based on extreme learning machine network
(ELM) with Python programming language. In the implementation phase of the
ELM, "Linear Activation Function" and "Sigmoid Activation Function" are
chosen as activation functions. MSE (Mean Squared Error), RMSE (Root Mean
Squared Error), R$^{2}$ score and MAE (Mean Absolute Error) are used as
performance criteria to train and test phase. We also analyzed their
performance in terms of time value. All codes were run on a computer with
the following specifications: Intel(R) Core(TM) i5-7200U CPU @ 2.50GHz 2.71
GHz, 8.0 GB installed RAM. Eventually performance of our algorithm is
compared with different algorithms (Fista in \cite{26}, Vifba in
\cite{27}, Algorithm 1 in \cite{3} and Algorithm 5 in \cite{29}).
\\
We improved the results by increasing the number of iterations and the
number of Hidden Nodes due to avoid of overfitting problem. The purpose of
the number of Hidden Nodes is to enable the neural network to learn more
complex relationships. The number of Hidden Nodes acts as a filter that
processes the inputs and then sends them to the output layer. Under all
these common conditions, we examined our algorithm compared with other
algorithms for the Regression and Data Classification problems. First,
Regression analysis was performed using the linear activation function for
the sine curve. As a result of this comparison, Figure \ref{Fig1} obtained from 1000 iterations for randomly selected 10 sine values is presented below.%

\begin{figure}[H]
\par
\begin{center}
\includegraphics[width=0.8\textwidth]{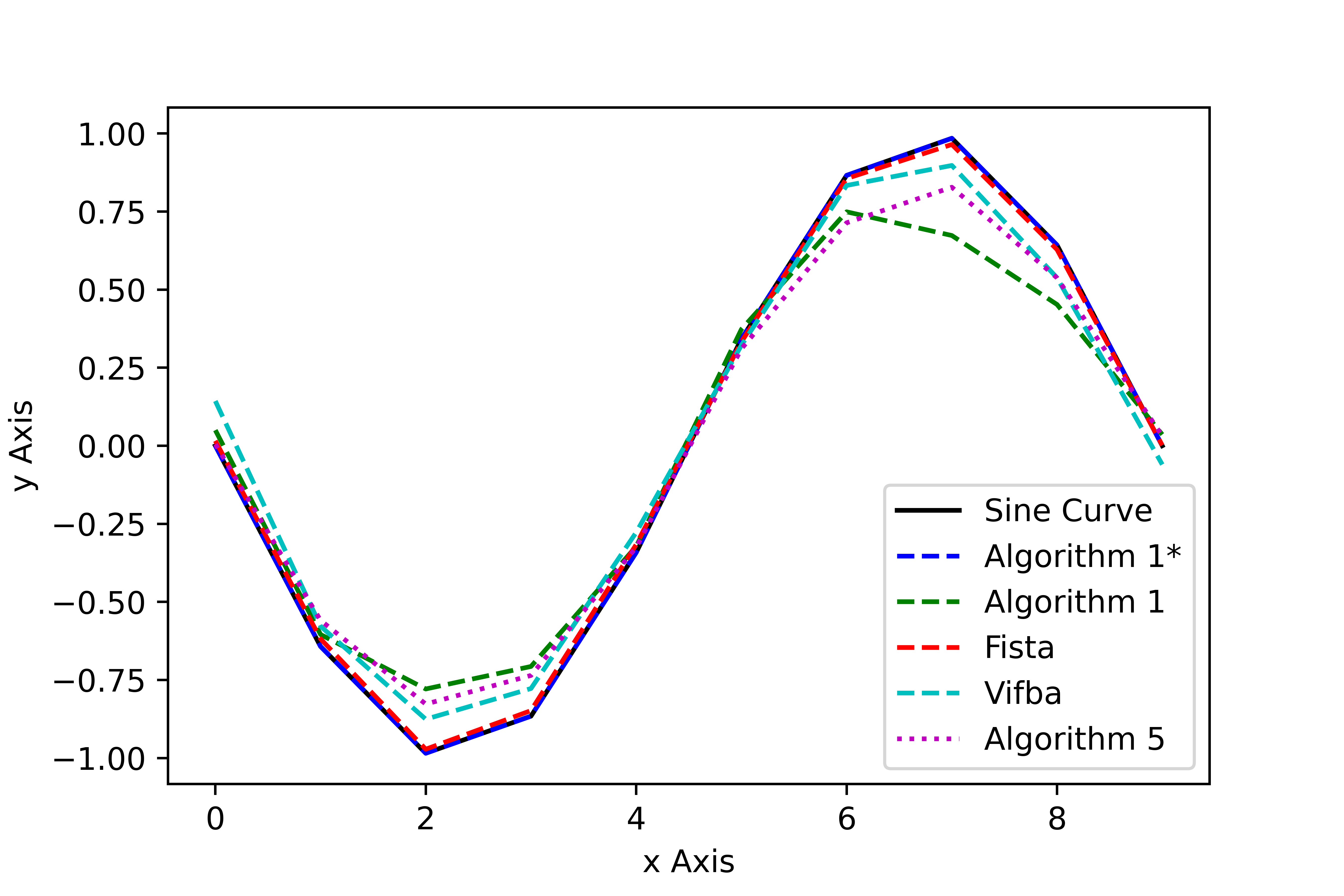} %
\end{center}
\caption{Regression analysis using linear activation function for random 10 sine values}
\label{Fig1}
\end{figure}
Figure \ref{Fig1} shows that the designed model exhibited the best
convergent behavior to the sinusoidal wave. We then performed a statistical performance analysis. The
results are given in Table \ref{Tab1}, which compares success ranking and time according to the error test results with other algorithms. As can be seen from Table \ref{Tab1}, Algorithm 1* works faster and more successfully than all other algorithms.
\begin{table}[h]
\centering
\begin{tabular}[b]{|c|c|c|c|c|c|}
\hline
\textbf{Algorithms} & \textbf{CPU Time} & \textbf{MSE} & \textbf{RMSE} &
\textbf{R}$^{2}$\textbf{\ Score} & \textbf{MAE} \\ \hline
\emph{Algorithm 1*} & 0,0061380 & 5,36789E-10 & 2,31687E-05 & 0,99999999
& 2,07639E-05 \\ \hline
Fista & 0,0175220 & 0,000277777 & 0,016666639 & 0,99938271 & 0,015177549
\\ \hline
Vifba & 0,0146362 & 0,007225812 & 0,085004778 & 0,98394263 & 0,077460916
\\ \hline
Algorithm 5 & 0,0314197 & 0,010976636 & 0,104769445 & 0,97560747 &
0,086961912 \\ \hline
Algorithm 1 & 0,0157370 & 0,022115578 & 0,148713072 & 0,95085427 &
0,115831474 \\ \hline
\end{tabular}
\caption{Statistical performance analysis for linear activation function and random 10 sine values}
\label{Tab1}
\end{table}

After these results, we realized simulations using linear activation
function for 1000 iterations and randomly selected 100 sine values. Observed
results are given in Figure \ref{Fig2}. The statistical results of the
simulations are also given in Table \ref{Tab2}.
\begin{figure}[H]
\par
\begin{center}
\includegraphics[width=0.8\textwidth]{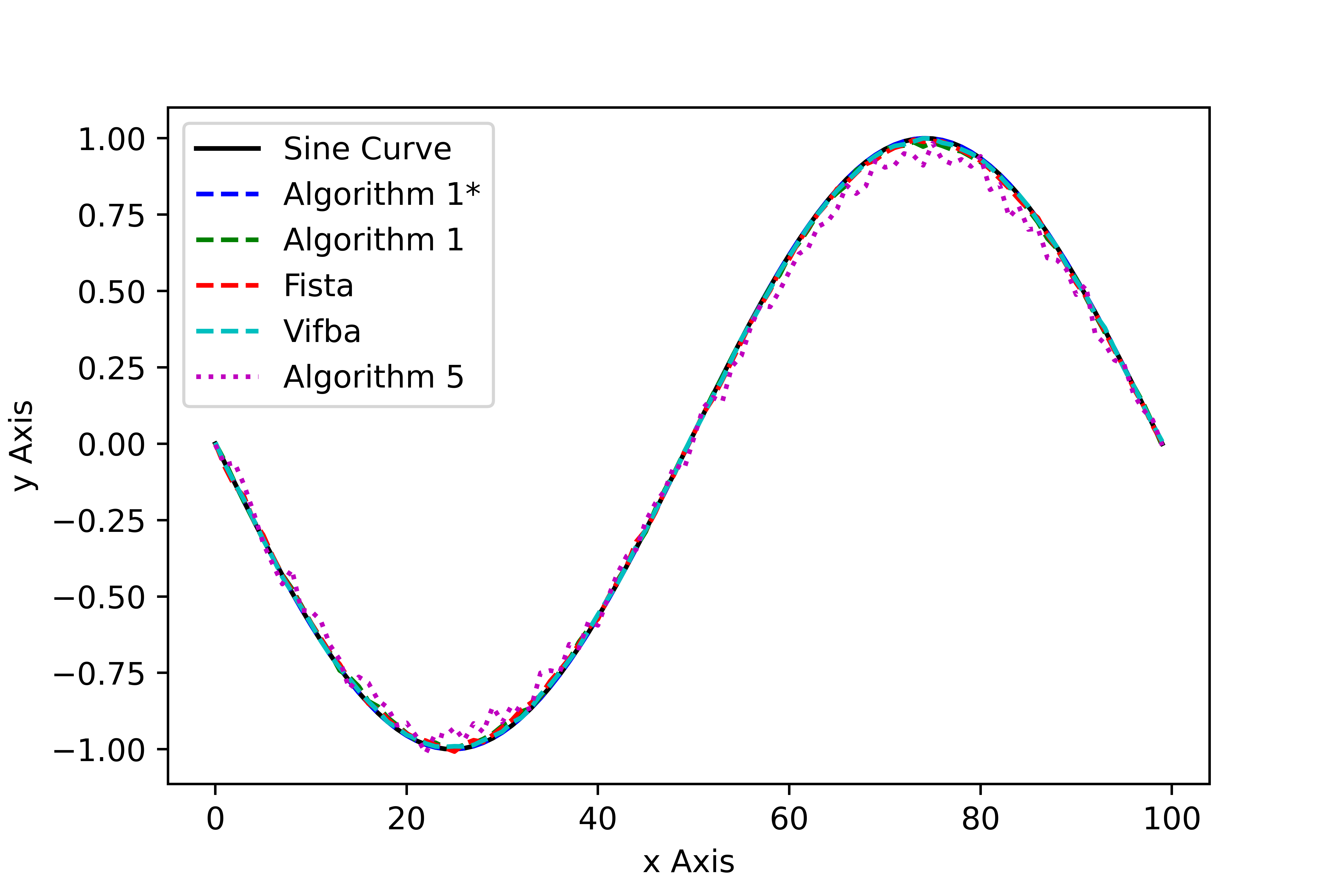} %
\end{center}
\caption{Regression analysis using linear activation function for random 100 sine values}
\label{Fig2}
\end{figure}
\begin{table}[h]
\centering
\begin{tabular}[b]{|c|c|c|c|c|c|}
\hline
\textbf{Algorithms} & \textbf{CPU Time} & \textbf{MSE} & \textbf{RMSE} &
\textbf{R}$^{2}$\textbf{\ Score} & \textbf{MAE} \\ \hline
\emph{Algorithm 1*} & 0,0114526 & 2,36002E-10 & 1,53624E-05 & 0,99999999
& 1,30988E-05 \\ \hline
Vifba & 0,0312581 & 2,50000E-05 & 0,005000000 & 0,99994949 & 0,004165242
\\ \hline
Fista & 0,0138959 & 9,83294E-05 & 0,009916117 & 0,99980135 & 0,008377328
\\ \hline
Algorithm 5 & 0,0157158 & 9,91574E-05 & 0,009957779 & 0,99979968 &
0,007872083 \\ \hline
Algorithm 1 & 0,0785269 & 0,002296867 & 0,047925644 & 0,99535986 &
0,040710174 \\ \hline
\end{tabular}
\caption{Statistical performance analysis for linear activation function and random 100 sine values}
\label{Tab2}
\end{table}

As can be seen in Figure \ref{Fig2} and Table \ref{Tab2}, Algorithm 1*
has higher regression performance and works faster than other algorithms.
After these successful results, we performed Regression analysis using the
sigmoid activation function for the sine curve. As a result of this
comparison, Figure \ref{Fig3}, obtained from 5000 iterations for 10
random sine values, is presented below.%
\begin{figure}[H]
\par
\begin{center}
\includegraphics[width=0.8\textwidth]{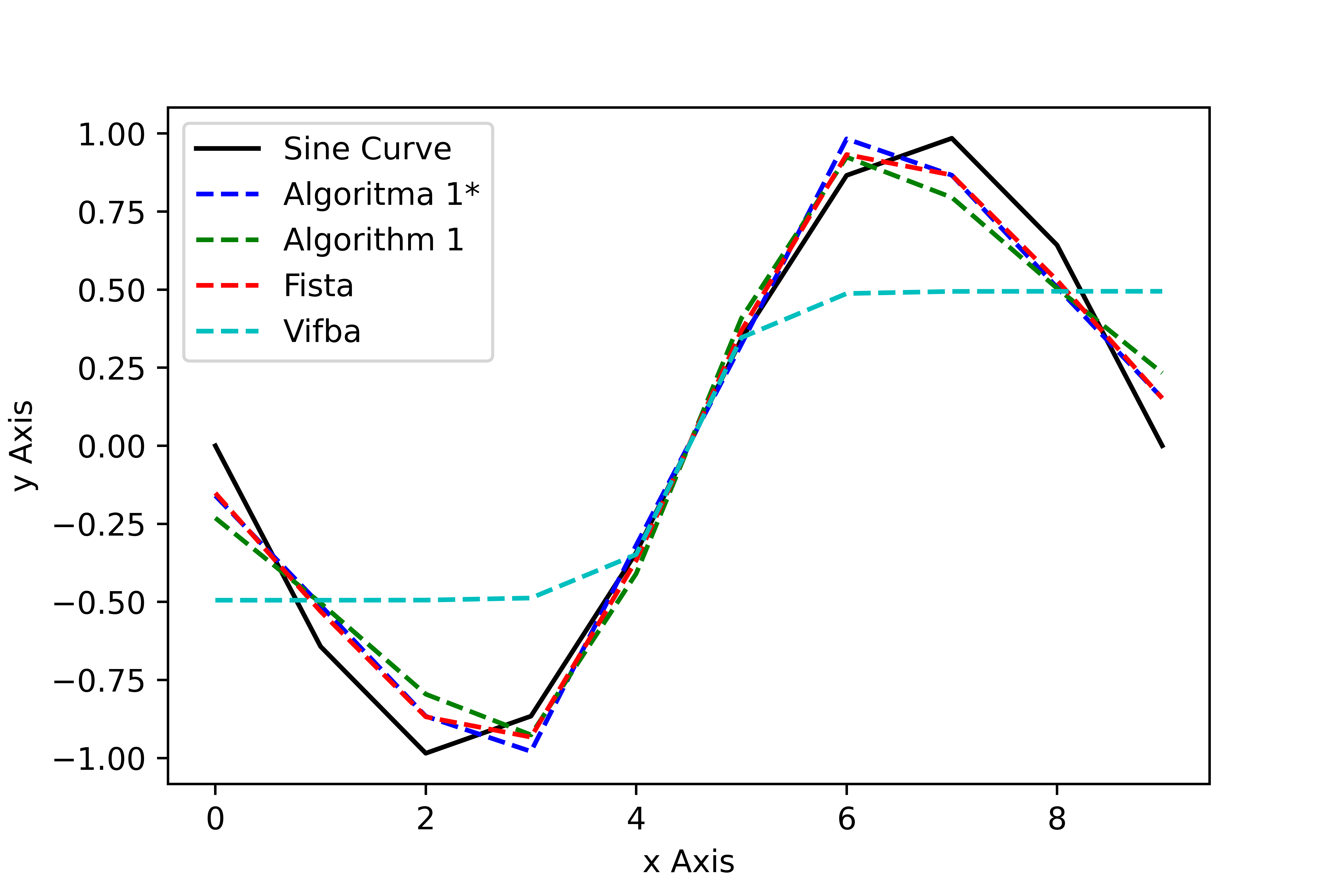} %
\end{center}
\caption{Regression analysis using sigmoid activation function for random 10 sine values}
\label{Fig3}
\end{figure}
In Table \ref{Tab3} below, success ranking and time comparison are made according to the error test results with other algorithms. As can be seen from Table \ref{Tab3}, Algorithm 1* works more successfully than all other algorithms except Fista \cite{26}. According to the time values, Algorithm 1* has the latest reaction time, even though the differences are negligible.%

\begin{table}[h]
\centering
\begin{tabular}[b]{|c|c|c|c|c|c|}
\hline
\textbf{Algorithms} & \textbf{CPU Time} & \textbf{MSE} & \textbf{RMSE} &
\textbf{R}$^{2}$\textbf{\ Score} & \textbf{MAE} \\ \hline
Fista & 0,04846715 & 0,01064577 & 0,10317836 & 0,97634272 & 0,09337873
\\ \hline
\emph{Algorithm 1*} & 0,06319165 & 0,01512668 & 0,12299057 & 0,96638515
& 0,11060022 \\ \hline
Algorithm 1 & 0,06293702 & 0,02238588 & 0,14961913 & 0,95025358 &
0,13500654 \\ \hline
Vifba & 0,06139302 & 0,13004268 & 0,36061432 & 0,71101624 & 0,30326138
\\ \hline
\end{tabular}
\caption{Statistical performance analysis for sigmoid activation function and random 10 sine values.}
\label{Tab3}
\end{table}

Finally, after the results we obtained, we performed Regression analysis
using the sigmoid activation function as a result of 5000 iterations for 100
random sine values. Again, we have compared our results with other algorithms
and presented their figures and tables below.%
\begin{figure}[H]
\par
\begin{center}
\includegraphics[width=0.8\textwidth]{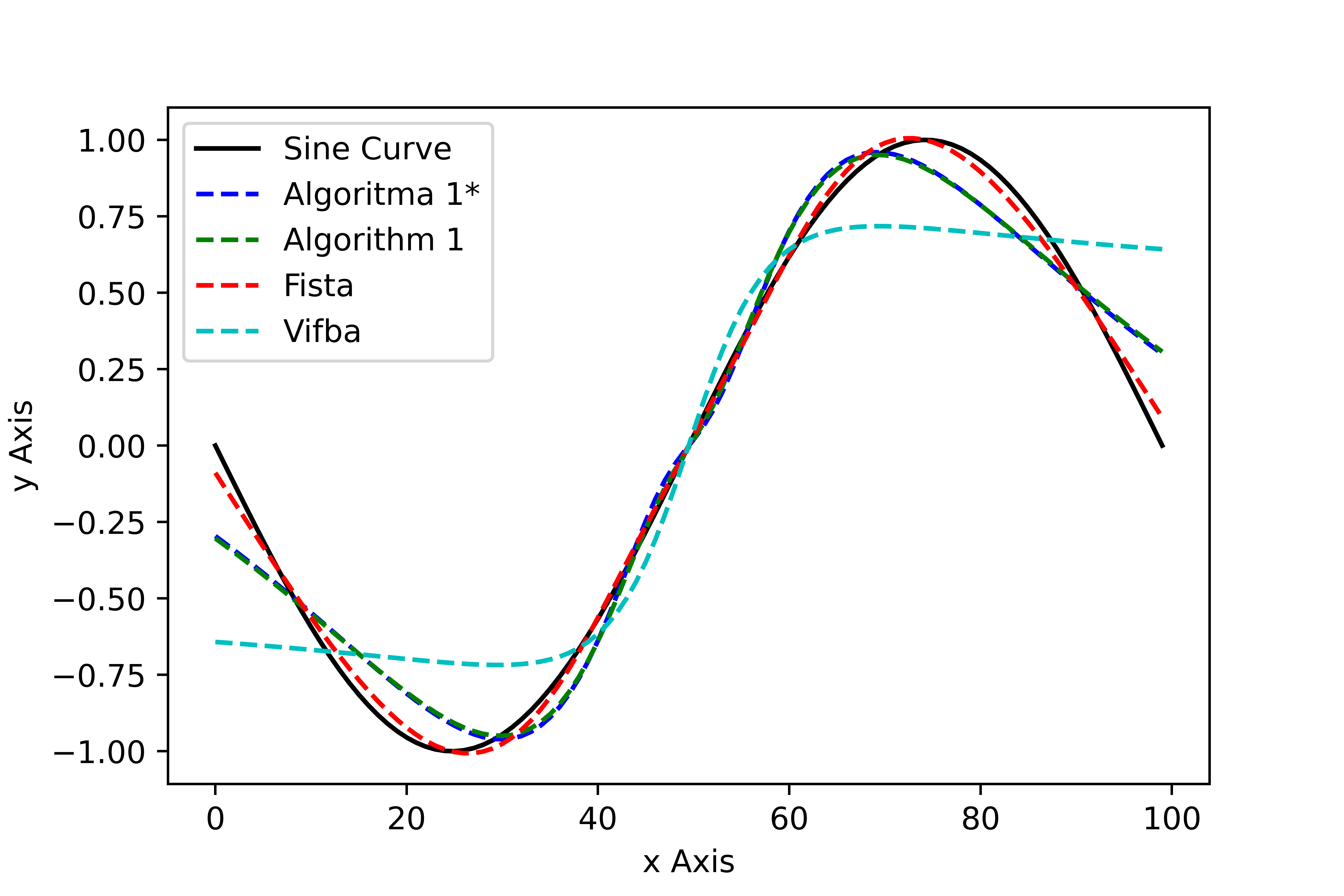} %
\end{center}
\caption{Regression analysis using sigmoid activation function for random 100 sine values.}
\label{Fig4}
\end{figure}
\begin{table}[h]
\centering
\begin{tabular}[b]{|c|c|c|c|c|c|}
\hline
\textbf{Algorithms} & \textbf{CPU Time} & \textbf{MSE} & \textbf{RMSE} &
\textbf{R}$^{2}$\textbf{\ Score} & \textbf{MAE} \\ \hline
Fista & 0,3937718 & 0,001036359 & 0,032192538 & 0,9979063 & 0,026613099
\\ \hline
\emph{Algorithm 1*} & 0,4539487 & 0,012225424 & 0,110568638 & 0,9753021
& 0,089891046 \\ \hline
Algorithm 1 & 0,3770158 & 0,012830698 & 0,113272669 & 0,9740793 &
0,091232123 \\ \hline
Vifba & 0,3823239 & 0,056802116 & 0,238331945 & 0,8852482 & 0,189809277
\\ \hline
\end{tabular}
\caption{Statistical performance analysis for sigmoid activation function and random 100 sine values.}
\label{Tab4}
\end{table}

We now put the Algorithm 1* to the test to solve the data classification problem. We used Iris data set defined by the famous statistician and biologist Sir Ronald Aylmer Fisher in 1936 \cite{30}. This dataset contains features used to classify the species of Iris flowers and is a sample dataset often used for training and evaluating machine learning algorithms. The 150-sample Iris dataset was divided into \%80 training and \%20 testing phases. Simulations were performed with designed algorithms using the sigmoid activation function, and the results obtained are given in Figure \ref{Fig5} and Table \ref{Tab5}. We used confusion matrix method to see the success of Algorithm 1*.
\begin{figure}[H]
\par
\begin{center}
\includegraphics[width=0.8\textwidth]{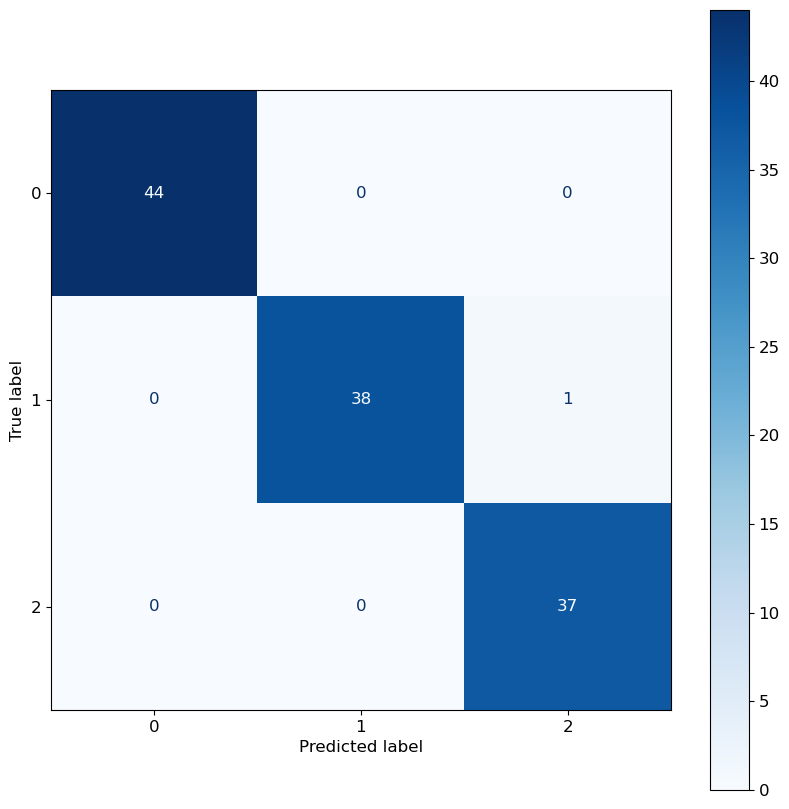} %
\end{center}
\caption{Confusion matrix for Algorithm 1*}
\label{Fig5}
\end{figure}
\begin{table}[h]
	\centering
	\begin{minipage}{\linewidth}
		\centering
		\begin{tabular}{|c|c|c|c|c|}
			\hline
			& Precision & Recall & F1-Score & Support \\ \hline
			0 (Setosa) & 1.00 & 1.00 & 1.00 & 44 \\ \hline
			1 (Versicolor) & 1.00 & 0.97 & 0.99 & 39 \\ \hline
			2 (Virginica) & 0.97 & 1.00 & 0.99 & 37 \\ \hline
		\end{tabular}
	\end{minipage}
	
	\vspace{0.5cm} 
	
	\begin{minipage}{\linewidth}
		\centering
		\begin{tabular}{|c|c|}
			\hline
			Accuracy & 0.99 \\ \hline
			Total Support & 120 \\ \hline
		\end{tabular}
	\end{minipage}
	
	\caption{Performance of Algorithm 1* for data classification problem}
	\label{Tab5}
\end{table}

As can be seen from Figure \ref{Fig5} and Table \ref{Tab5}, the Algorithm 1* exhibited succesfull
performance; \%100, \%97,43 and \%100 for class 0,1,2 respectively. It was
also seen that our algorithm concluded higher rate success for predicting of
truly positive class and total positive classes (TPR).

\section{Conclusion and Discussion}

We propose a forward-backward splitting algorithm to find the zero point of
the sum of maximal monotone and co-coercive operators in real Hilbert
spaces. Unlike most of the algorithms in the literature, the new algorithm has been created that includes double inertial terms. We proved the weak
convergence of the sequence generated by the algorithm and gave applications to regression and data classification problems. When we compare the algorithm we have presented
with other algorithms that give successful results in the literature, we
have obtained more successful results than the others. In our future studies, necessary arrangements will be made to optimize the convergence rate of the algorithm and to provide more successful results. \\

\textbf{Data availability} The author confirms that no real data have been
used in this manuscript. All data are synthetically generated.

\textbf{Competing Interests} The authors declare that they have no conflict
of interest.

\end{document}